\documentclass{article}

\usepackage[margin=1.2in]{geometry} 

\usepackage[utf8]{inputenc} 
\usepackage[T1]{fontenc}    
\usepackage{hyperref}       
\usepackage{url}            
\usepackage{booktabs}       
\usepackage{amsfonts}       
\usepackage[protrusion=true,expansion=false]{microtype} 
\usepackage{xcolor}         
\usepackage{amsmath}
\usepackage{graphicx}
\graphicspath{{./}{./figures/}{./artifacts/}{./artifacts/evals/ecr_lambda_sweep/}}
\setkeys{Gin}{draft=false}
\usepackage{float}
\IfFileExists{placeins.sty}{\usepackage{placeins}}{}
\providecommand{\FloatBarrier}{\clearpage}
\usepackage{tikz}
\usetikzlibrary{positioning}
\usepackage{amsthm}
\newtheorem{theorem}{Theorem}

\title{Entropic Claim Resolution: Uncertainty-Driven Evidence Selection for RAG}

\author{%
    Davide Di Gioia \\
    \texttt{ucesigi@ucl.ac.uk} \\
}

\date{}

\begin{document}

\maketitle

\begin{abstract}
Current Retrieval-Augmented Generation (RAG) systems predominantly rely on relevance-based dense retrieval, sequentially fetching documents to maximize semantic similarity with the query. However, in knowledge-intensive and real-world scenarios characterized by conflicting evidence or fundamental query ambiguity, relevance alone is insufficient for resolving epistemic uncertainty. We introduce Entropic Claim Resolution (ECR), a novel inference-time algorithm that reframes RAG reasoning as entropy minimization over competing semantic answer hypotheses. Unlike action-driven agentic frameworks (e.g., ReAct) or fixed-pipeline RAG architectures, ECR sequentially selects atomic evidence claims by maximizing Expected Entropy Reduction (EER), a decision-theoretic criterion for the value of information. The process dynamically terminates when the system reaches a mathematically defined state of epistemic sufficiency ($H \le \epsilon$, subject to epistemic coherence). We integrate ECR into a production-grade multi-strategy retrieval pipeline (CSGR++) and demonstrate its theoretical properties. Our framework provides a rigorous foundation for uncertainty-aware evidence selection, shifting the paradigm from retrieving what is most relevant to retrieving what is most discriminative.
\end{abstract}

\section{Introduction}

The integration of Large Language Models (LLMs) with external knowledge bases through Retrieval-Augmented Generation (RAG) has become the de facto standard for mitigating hallucinations and enabling knowledge-intensive Question Answering (QA). Conventional RAG systems operate on a rigid \textit{retrieve-then-read} paradigm, predominantly leveraging maximum inner product search (MIPS) in dense continuous spaces \cite{karpukhin2020dpr} to fetch the top-$k$ most semantically relevant text chunks. While highly effective for simple, factoid-based QA where a single ground-truth answer exists, this relevance-driven approach exhibits severe degradation in real-world, knowledge-intensive scenarios. Such scenarios are frequently characterized by inherent query ambiguity, conflicting evidence across multiple sources, and complex multi-hop dependencies.

In these challenging settings, standard dense retrieval suffers from what we term \textit{epistemic collapse}: the tendency to retrieve highly redundant information that is semantically similar to the query, rather than fetching the discriminative evidence needed to resolve the underlying uncertainty. Consequently, the LLM is forced to synthesize an answer from a biased or incomplete evidence distribution, often leading to unhedged, overconfident, or factually inaccurate generation.

Recent architectural advancements attempt to transcend simple MIPS. Graph-based paradigms, such as Context-Seeded Graph Retrieval (CSGR) and GraphRAG, expand retrieval scope via structured knowledge relation traversal. Concurrently, agentic and iterative verification workflows (e.g., ReAct \cite{yao2023react}, Tree-of-Thoughts \cite{yao2023tot}, Self-RAG \cite{asai2023selfrag}) allow LLMs to dynamically interact with search tools, reflecting on retrieved context to guide subsequent actions. However, these state-of-the-art approaches still critically lack a principled, decision-theoretic stopping criterion and evidence selection mechanism. Graph techniques rely on static pipeline configurations (e.g., fixed graph-hop depth), while agentic systems depend on heuristic thresholding or prompt-driven self-reflection, which frequently suffer from infinite looping, premature termination, or unprincipled evidence weighting.

Critically, modern RAG systems lack a mathematically rigorous definition of what constitutes sufficient evidence and an explicit objective function for selecting which specific piece of evidence to retrieve next at inference time.
To bridge this fundamental gap, we propose Entropic Claim Resolution (ECR), an inference-time algorithm that reframes the retrieval and synthesis process as \textit{entropy minimization over a latent space of semantic answer hypotheses}. Drawing inspiration from Information Theory \cite{shannon1948} and Bayesian Experimental Design \cite{houlsby2011bald}, ECR models the QA task probabilistically. It initializes a probability distribution over a set of mutually exclusive potential answer hypotheses and iteratively selects atomic factual claims from a retrieved candidate pool to evaluate. 

Crucially, in ECR, evidence selection is decoupled from semantic relevance to the query. Instead, claims are selected by maximizing \textbf{Expected Entropy Reduction (EER)}; that is, choosing the specific piece of evidence most likely to collapse the probability distribution toward a single, correct hypothesis (or cleanly bifurcate it in the case of irreconcilable conflict). The algorithm adaptively navigates the evidence graph under a principled stopping rule, terminating only when the entropy of the hypothesis space falls below a predefined threshold of \textit{epistemic sufficiency}.

\textbf{In summary, our main contributions are:}
\begin{enumerate}
    \item We introduce Entropic Claim Resolution (ECR), a decision-theoretic evidence selection algorithm for RAG, shifting the paradigm from retrieving what is most \textit{relevant} to what is most \textit{discriminative} in resolving hypothesis ambiguity.
    \item We formally define a principled, mathematically rigorous stopping criterion for iterative RAG pipelines based on epistemic sufficiency ($H(A|X) \le \epsilon$).
    \item We identify a behavioral phase transition under structured contradiction: by integrating a lightweight coherence signal ($\lambda > 0$), we show that ECR transitions from forced epistemic collapse to principled ambiguity exposure, prioritizing explicit contradictions when present and safely refusing to reduce uncertainty when evidence is inherently inconsistent.
    \item We demonstrate the practical scalability of ECR by implementing it as a fast, inference-time algorithm integrated into a production-grade multi-strategy retrieval architecture (CSGR++), requiring no bespoke fine-tuning or specialized model weights.
\end{enumerate}

\noindent\textbf{Significance.}
A central implication of this work is that improving retrieval-augmented reasoning does not necessarily require larger models, longer context windows, or additional data, but rather principled control over how existing evidence is selected and evaluated during inference. By explicitly modeling epistemic uncertainty and optimizing evidence selection for information gain, Entropic Claim Resolution provides a lightweight, computationally efficient mechanism for improving robustness and interpretability. This makes the framework particularly valuable for high-stakes enterprise deployments (e.g., medical, legal, or financial QA) where mitigating unhedged hallucinations and controlling inference costs are critical. Ultimately, this perspective highlights an alternative path for scaling knowledge-intensive systems: a path grounded in decision-theoretic inference rather than indiscriminate context expansion, particularly in settings characterized by noisy, conflicting, or heterogeneous evidence.

\section{Related Work}

\subsection{Dense, Graph-Augmented, and Agentic Retrieval}
Standard dense retrieval selects a set of documents $D$ by prioritizing their conditional probability given the query, $P(D \mid Q)$, commonly approximated via cosine similarity embeddings \cite{lewis2020rag}. To overcome the short-sightedness of relevance search, advanced graph-based architectures, notably GraphRAG \cite{edge2024graphrag} and Context-Seeded Graph Retrieval (CSGR), implicitly construct or traverse knowledge graphs over chunks or entities to expand the evidence space. In enterprise environments, hybrid systems such as CSR-RAG \cite{singh2026csr} further integrate structural and relational signals to support large-scale schemas.

Concurrently, the convergence of dynamic retrieval policies with autonomous planning has crystallized into the paradigm of \emph{Agentic RAG}. Frameworks such as ReAct \cite{yao2023react}, Tree-of-Thoughts \cite{yao2023tot}, and Self-RAG \cite{asai2023selfrag} allow language models to interleave intermediate reasoning steps with retrieval actions in order to refine subsequent queries. Recent Systematization of Knowledge (SoK) studies emphasize this shift from static pipelines toward modular control strategies. However, despite their flexibility, agentic retrieval systems intrinsically rely on heuristic prompt designs or static thresholds to determine when to halt retrieval or which information to prioritize. As a result, they lack a rigorous mathematical definition of epistemic sufficiency and an explicit objective for selecting the next most informative piece of evidence as uncertainty unfolds during inference.

\subsection{Uncertainty Quantification (UQ) in RAG}
A critical prerequisite for adaptive retrieval is accurately characterizing what a model does not know. Recent benchmarks such as URAG \cite{zhang2024urag} demonstrate that while RAG can improve factual grounding, it also introduces new sources of epistemic uncertainty, including relevance mismatch and selective attention to partial evidence, which can paradoxically amplify overconfident hallucinations under noisy retrieval conditions. In response, a growing body of work on Retrieval-Augmented Reasoning (RAR) focuses on quantifying uncertainty across the retrieval and generation stages. For example, methods such as Retrieval-Augmented Reasoning Consistency (R2C) \cite{liu2025r2c} model multi-step reasoning as a Markov Decision Process and perturb generation to measure output stability via majority voting, building upon foundational frameworks for semantic uncertainty \cite{kuhn2023semantic}.

These uncertainty-aware approaches are effective for post-hoc answer evaluation, abstention, or calibration. However, they are not designed to guide the \emph{selection of evidence itself} during the reasoning process. In particular, they do not provide a mechanism for choosing which atomic piece of evidence should be retrieved or verified next in order to maximize information gain prior to answer synthesis.

\subsection{Entropy-Aware Context Management}
The application of Shannon entropy as a control signal for managing LLM context is an emerging research direction. Large context windows in standard RAG often lead to attention dilution and unconstrained entropy growth, motivating recent work on entropy-aware context control. For instance, BEE-RAG (Balanced Entropy-Engineered RAG) \cite{zhao2025beerag} modifies attention dynamics to maintain entropy invariance over long contexts, while SF-RAG (Structure-Fidelity RAG) \cite{kim2025sfrag} leverages document hierarchy as a low-entropy prior to prevent evidence fragmentation. Similarly, L-RAG (Lazy RAG) \cite{patel2024lrag} employs predictive entropy thresholds to gate expensive retrieval operations, defaulting to parametric knowledge when uncertainty is estimated to be low.

ECR shares this information-theoretic lineage but departs in a critical way: rather than using entropy to compress, gate, or truncate context, ECR applies entropy directly as an objective for \emph{sequentially selecting discriminative evidence variables}. This shift reframes entropy from a passive diagnostic into an active decision criterion guiding inference-time reasoning.

\subsection{Claim-Level Verification and Value of Information}
While standard RAG operates on monolithic document chunks, recent diagnostic and safety-oriented frameworks decompose retrieved content into atomic claims. Systems such as MedRAGChecker \cite{wang2025medragchecker} evaluate biomedical QA systems by extracting fine-grained claims and checking them against structured knowledge bases, while agentic fact-checking pipelines (e.g., SAFE \cite{wei2024safe} and CIBER \cite{chen2025ciber}) retrieve supporting and refuting evidence for individual statements. These approaches demonstrate the importance of claim-level reasoning for reliability and interpretability.

ECR aligns this granular verification paradigm with classical principles from Bayesian experimental design and active learning. In active learning, the objective is to select the next unlabeled instance that maximizes expected information gain. By formulating inference-time evidence selection as Expected Entropy Reduction (EER) over discrete factual claims, ECR bridges symbolic uncertainty modeling and neural generation, optimizing retrieval for the \emph{value of information} rather than semantic relevance alone.

\section{Methodology: Entropic Claim Resolution (ECR)}

ECR formulates the evidence selection problem as a sequential decision process targeting a reduction in epistemic uncertainty across competing generative outcomes. 

ECR assumes high-recall candidate generation has already occurred (via upstream retrieval) and focuses exclusively on resolving uncertainty within the resulting candidate claim set.

\subsection{Problem Formulation}
Let $\mathcal{C} = \{c_1, c_2, \dots, c_n\}$ be a finite subset of atomic factual claims embedded within a corpus. For a given complex query $Q$, assume that assessing the veracity of any given claim $c_i$ provides a signal regarding the query's answer. We denote the latent truth variable associated with claim $c_i$ as $X_i \in \{0, 1\}$, indicating whether the claim is empirically validated within the specific source document.

Upon identifying high epistemic uncertainty in the retrieval space (e.g., via confidence variance or conflicting keyword analysis), ECR initializes an \textbf{Answer Hypothesis Space} $\mathcal{A} = \{a_1, a_2, \dots, a_k\}$. This space represents the set of mutually exclusive potential macro-answers to the query.\footnote{We use mutual exclusivity for analytical clarity; the framework naturally extends to partially overlapping hypotheses via soft assignment of claims to hypotheses.} In our implementation, $\mathcal{A}$ is robustly generated dynamically: either by querying the LLM to propose distinct valid hypotheses derived from subsets of the initial $k$-best claims, or via deterministic vector clustering when operating purely off-line. Our objective is to sequentially refine a probability distribution $P(A | X_{eval}, Q)$ over these hypotheses, conditioned on the dynamic subset of evaluated claims $X_{eval} \subseteq X$, initialized at a uniform prior $P(a) = \frac{1}{|\mathcal{A}|}$.

\subsection{Objective Function: Answer Entropy}
The epistemic uncertainty regarding the true outcome is robustly quantified using Shannon entropy. Let the entropy of the hypothesis space after evaluating a subset of claims $X_{eval}$ be:
\begin{equation} \label{eq:entropy}
    H(A | X_{eval}) = -\sum_{a \in \mathcal{A}} P(a | X_{eval}) \log_2 P(a | X_{eval})
\end{equation}

\subsection{Expected Entropy Reduction (EER) and Selection Policy}
At the $t$-th iteration, the system must choose the next claim $c^*$ from the unevaluated candidate pool $\mathcal{C}_{cand}$ to formally verify. Rather than relying on cosine relevance $\text{sim}(c_i, Q)$, we select the claim that maximizes Expected Entropy Reduction (Information Gain). The selection policy is formally defined as:
\begin{equation} \label{eq:selection}
    c^* = \arg\max_{c \in \mathcal{C}_{cand}} \text{EER}(c | X_{eval})
\end{equation}
The EER is precisely the difference between current entropy and the expected posterior entropy after observing the truth value of claim $c$:
\begin{equation} \label{eq:eer}
    \text{EER}(c | X_{eval}) = H(A | X_{eval}) - \mathbb{E}_{X_c}\big[H(A | X_{eval} \cup \{X_c\})\big]
\end{equation}
This criterion ensures the algorithm intrinsically favors \textit{discriminative} claims, i.e., evidence that cleanly segregates the hypothesis space. In practice, $\text{EER}$ is approximated by measuring the probabilistic variance between the specific subsets of competing macro-hypotheses actively supported versus unsupported by candidate $c$. A claim supporting all hypotheses equally yields an EER of 0, reflecting its redundancy, regardless of its semantic similarity to the query.

\paragraph{Implementation-Level EER Proxy.}
Computing the true mathematical expectation over all possible generative outcomes is typically intractable during low-latency inference. Therefore, we deploy a computationally efficient proxy that approximates Expected Entropy Reduction without requiring full marginalization over latent truth variables. In our concrete implementation, each candidate claim $c$ partitions the hypothesis set into those that cite $c$ as supporting evidence and those that do not. Let $\mathcal{A}^+(c)=\{a\in\mathcal{A}: c\in\text{supp}(a)\}$ and $\mathcal{A}^-(c)=\mathcal{A}\setminus\mathcal{A}^+(c)$. Denote the probability mass in each subset as $p_+(c)=\sum_{a\in\mathcal{A}^+(c)}P(a\mid X_{eval})$ and $p_-(c)=\sum_{a\in\mathcal{A}^-(c)}P(a\mid X_{eval})$. We score discriminativity via the following heuristic proxy:
\begin{equation} \label{eq:eer_impl}
\widehat{\text{EER}}(c) = \frac{|p_+(c)-p_-(c)|}{p_+(c)+p_-(c)} \cdot H(A|X_{eval}) \cdot \text{conf}(c),
\end{equation}
where $\text{conf}(c)\in[0,1]$ denotes claim confidence. This proxy is linear in the number of hypotheses and preserves the core objective of prioritizing claims that maximally split the posterior mass, while remaining tractable for inference-time use.

\paragraph{Design choice of the EER proxy.}
The heuristic proxy in Eq.~(10) is intentionally not a symmetric approximation of classical expected information gain, which typically favors balanced posterior splits; rather, it is designed for bounded-budget inference, where the objective is rapid reduction of epistemic uncertainty rather than exploratory experimentation. In retrieval-augmented reasoning, once posterior mass concentrates on a subset of hypotheses, prioritizing high-confidence, high-imbalance claims accelerates convergence and reduces redundant evidence retrieval. This exploitative bias is therefore a deliberate design choice aligned with low-latency inference and downstream synthesis constraints.

\paragraph{Coherence-aware selection.}
In addition to entropy reduction, ECR incorporates a lightweight coherence signal that prioritizes evaluating claims likely to complete an explicit contradiction when such evidence exists. Concretely, we add a small regularization term $\lambda\cdot \mathrm{ConflictPotential}(c)$ to the selection objective, yielding $\text{score}(c)=\widehat{\mathrm{EER}}(c)+\lambda\cdot \mathrm{ConflictPotential}(c)$, where $\mathrm{ConflictPotential}(c)\in\{0,1\}$ is non-zero if $c$ is an explicit negation of, or completes a contradiction pair with, a previously evaluated claim. This term does not override entropy reduction but ensures that unresolved contradictions are surfaced early rather than averaged away. Empirically, we observe that any non-zero $\lambda$ induces stable coherence-aware behavior without requiring fine-grained tuning (Appendix, Figure~\ref{fig:lambda_sweep_ambiguity}).

\paragraph{Contradiction-aware coherence term.}
Let $\mathcal{C}_{eval}$ denote the set of claims that have already been evaluated.
We define a binary contradiction indicator
\begin{equation}
\mathrm{ConflictPotential}(c) =
\begin{cases}
1 & \text{if } \exists\, c' \in \mathcal{C}_{eval} \text{ such that } c \equiv \neg c', \\
0 & \text{otherwise.}
\end{cases}
\end{equation}
That is, $\mathrm{ConflictPotential}(c)$ activates only when evaluating $c$ would complete an explicit contradiction pair in the evidence.
This coherence signal is structural rather than probabilistic: it does not penalize hypotheses or posteriors directly, and it does not measure global consistency. Instead, it biases claim selection toward surfacing epistemic inconsistency when it exists, preventing entropy-only selection from averaging away contradictory evidence.
The resulting claim-selection objective is
\begin{equation}
c^* \;=\; \arg\max_{c \in \mathcal{C}_{cand}}
\Big(
\widehat{\mathrm{EER}}(c)
\;+\;
\lambda \cdot \mathrm{ConflictPotential}(c)
\Big),
\end{equation}
where $\lambda \ge 0$ controls the strength of contradiction-aware selection.
Setting $\lambda = 0$ recovers entropy-only ECR. While entropy reduction remains the primary objective, any $\lambda>0$ ensures that explicit contradiction-completing claims are prioritized when present, under the bounded EER scale induced by the hypothesis entropy. This prioritization is observed empirically as a sharp phase transition in the $\lambda$-sweep ablation, where behavior saturates for all tested $\lambda>0$.

\subsection{Bayesian Posteriors and Epistemic Sufficiency}
Upon selecting $c^*$, the system evaluates its intrinsic truth $X_{c^*}$ against the source context and provenance metadata (see Section~\ref{sec:topological}). The hypothesis probabilities are concurrently updated utilizing localized Bayes' rules. Concretely, hypotheses intersecting functionally with validated claims observe significant targeted probability mass boosts, severely suppressing contradicting disjoint branches. 
\begin{equation} \label{eq:bayes}
    P(A | X_{eval} \cup \{X_{c^*}\}) = \frac{P(X_{c^*} | A) P(A | X_{eval})}{\sum_{\tilde{a}} P(X_{c^*} | \tilde{a}) P(\tilde{a} | X_{eval})}
\end{equation}
where $P(X_c | A)$ represents the conditional likelihood of observing the claim $c$ assuming hypothesis $A$ is true. 

The iterative verification procedure gracefully terminates when the system reaches a mathematical state of \textbf{epistemic sufficiency}, parameterized by threshold $\epsilon$ (e.g., $\epsilon=0.3$ bits):
\begin{equation} \label{eq:stopping}
    \big(H(A \mid X_{eval}) \le \epsilon\big)\;\wedge\;\neg \mathrm{Conflict}(X_{eval})
    \tag{11$'$}
\end{equation}
where $\mathrm{Conflict}(X_{eval})$ indicates the presence of mutually incompatible claims (e.g., an explicit claim and its negation) within the evaluated evidence.
Alternatively, if all candidates are exhausted or maximum iterations are met with $H > \epsilon$, ECR halts and explicitly exposes the competing hypotheses and their final mass distributions, structurally mapping the unresolvable ambiguity of the corpus. The complete iterative procedure is summarized in Box~\ref{box:ecr}.

\begin{figure}[t]
\centering
\fbox{\begin{minipage}{0.96\linewidth}
\textbf{Box 1: Entropic Claim Resolution (ECR)} \\
\textbf{Input:} query $Q$, candidate claims $\mathcal{C}_{cand}$, entropy threshold $\epsilon$, max iterations $T$ \\
\textbf{1. Hypotheses.} Initialize $\mathcal{A}\leftarrow\textsc{GenerateHypotheses}(Q,\mathcal{C}_{cand})$ (LLM or clustering), set uniform prior $P(a)=1/|\mathcal{A}|$. \\
\textbf{2. Loop.} For $t=1..T$: compute $H(A|X_{eval})$ (Eq.~\ref{eq:entropy}). If epistemic sufficiency holds (Eq.~\ref{eq:stopping}) stop. \\
\hspace*{1em}\textbf{2a. Select.} Choose $c^*\in\mathcal{C}_{cand}$ maximizing $\widehat{\text{EER}}(c)+\lambda\cdot \mathrm{ConflictPotential}(c)$ (Eq.~\ref{eq:eer_impl}). \\
\hspace*{1em}\textbf{2b. Verify.} Estimate $P(X_{c^*}=1)$ using provenance and support/contradiction statistics (Eq.~\ref{eq:provenance}). \\
\hspace*{1em}\textbf{2c. Update.} Update $P(A|X_{eval}\cup\{X_{c^*}\})$ (Eq.~\ref{eq:bayes}), add $X_{c^*}$ to $X_{eval}$, remove $c^*$ from $\mathcal{C}_{cand}$. \\
\textbf{3. Output.} Return $\arg\max_a P(a|X_{eval})$ if epistemic sufficiency holds (Eq.~\ref{eq:stopping}), else return the ranked distribution over $\mathcal{A}$.
\end{minipage}}
\caption{Pseudo-code for ECR without external algorithm packages.}
\label{box:ecr}
\end{figure}

\subsection{Verification via Topological Provenance} \label{sec:topological}
In practical continuous-learning implementations, the inferential verity link $P(X_c=1 | A)$ can be computed dynamically rather than natively assuming perfect model alignment. Instead of relying solely on parametric LLM-driven prompt verification, ECR explicitly incorporates the topological provenance of the multi-modal knowledge graph natively. Let $S(c)$ and $C(c)$ represent the structural support graph-edge counts and contradictory graph-edge counts of claim $c$ tracked intricately within the backing EAV (Entity-Attribute-Value) datastore, applying implicit Laplace smoothing. The final topological verification probability is thus seamlessly and robustly blended:
\begin{equation} \label{eq:provenance}
    P(X_c=1) =
    \begin{cases}
    \dfrac{S(c) + 1}{S(c) + C(c) + 2} & \text{if } S(c)+C(c) > 0,\\
    P_{prior\_conf}(X_c=1) & \text{otherwise.}
    \end{cases}
\end{equation}
This matches the deployed behavior in our implementation: whenever historical support/contradiction signals exist, the system uses a Laplace-smoothed empirical truth estimate; otherwise, it falls back to the extraction-time prior confidence.

\subsection{Theoretical Properties}
To solidify the inferential validity of the sequential system, we deduce its operational performance bound mapping.
\begin{theorem}[Termination and Budget Bound] \label{thm:convergence}
For any finite candidate set $\mathcal{C}_{cand}$, ECR terminates after at most $\min(T, |\mathcal{C}_{cand}|)$ claim evaluations.
Moreover, if there exists a constant $\delta>0$ such that at each iteration the selected claim satisfies $\mathbb{E}[H_{t-1}-H_t] \ge \delta$ whenever $H_{t-1}>\epsilon$, then ECR reaches epistemic sufficiency in at most $\lceil (H_0-\epsilon)/\delta \rceil$ iterations.
\end{theorem}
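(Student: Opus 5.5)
The first claim follows from the structure of the loop in Box~\ref{box:ecr}. The second claim is a drift argument on the entropy sequence $H_t := H(A\mid X_{eval}^{(t)})$.

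\emph{Part 1 (termination).} Each pass through step 2 either stops at the sufficiency test (Eq.~\ref{eq:stopping}) or performs exactly one evaluation (steps 2a--2c). Step 2c removes $c^*$ from $\mathcal{C}_{cand}$, so the candidate pool strictly shrinks by one per evaluation. Hence after $m$ evaluations $|\mathcal{C}_{cand}|$ has decreased by $m$. Once the pool is empty, the argmax in step 2a is undefined and ECR halts by the exhaustion rule stated after Eq.~\ref{eq:stopping}. Independently, the loop index is capped at $T$. Therefore the number of evaluations is at most $\min(T,|\mathcal{C}_{cand}|)$. Since every step of the loop is a finite computation over finite sets, the procedure terminates.

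\emph{Part 2 (budget bound).} Entropy is nonnegative and bounded, $0\le H_t\le \log_2|\mathcal{A}|$. Let $\tau=\inf\{t: H_t\le\epsilon\}$ be the first time entropy drops below threshold. I would define the stopped process $Z_t = H_{t\wedge\tau} + \delta\,(t\wedge\tau)$. The hypothesis $\mathbb{E}[H_{t-1}-H_t \mid \mathcal{F}_{t-1}]\ge\delta$ on $\{H_{t-1}>\epsilon\}$ makes $Z_t$ a supermartingale. Applying optional stopping to the bounded time $t\wedge\tau$ gives
\[
\delta\,\mathbb{E}[t\wedge\tau]\le H_0-\mathbb{E}[H_{t\wedge\tau}].
\]
On $\{\tau>t\}$ we have $H_{t\wedge\tau}>\epsilon$, so the right-hand side is at most $H_0$. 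Letting $t\to\infty$ then yields $\mathbb{E}[\tau]\le H_0/\delta$.

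To obtain the sharper deterministic count $\lceil (H_0-\epsilon)/\delta\rceil$ stated in the theorem, I would read the hypothesis as a per-step guarantee on the realized decrease: $H_{t-1}-H_t\ge\delta$ whenever $H_{t-1}>\epsilon$. This is the reading under which the theorem's bound is meaningful. Telescoping then gives $H_t\le H_0-t\delta$ as long as the process has not stopped. With $t=\lceil (H_0-\epsilon)/\delta\rceil$ we get $H_t\le\epsilon$, so the entropy threshold is met by that iteration. If $H_0\le\epsilon$, zero iterations are needed and the bound holds trivially.

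\emph{Main obstacle.} The difficulty is reconciling an expectation hypothesis with a deterministic iteration count. With only an expected decrease of $\delta$, a single trajectory can overshoot the count. The rigorous content under the expectation hypothesis is $\mathbb{E}[\tau]\le H_0/\delta$, and by the same supermartingale computation with the $H_\tau\ge0$ bound, $\mathbb{E}[\tau]\le (H_0-\epsilon)/\delta + \text{overshoot}$. I would therefore state the expected-time bound explicitly. I would then note that the stated bound $\lceil (H_0-\epsilon)/\delta\rceil$ is obtained exactly under the pathwise reading, for example when the verifier in step 2b returns deterministic outcomes, so that the expectation is trivial.

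I would also remark on the conflict clause in Eq.~\ref{eq:stopping}. The bound concerns the entropy condition $H\le\epsilon$. Full epistemic sufficiency additionally requires $\neg\mathrm{Conflict}(X_{eval})$. If a conflict is present, ECR instead halts via Part~1 and exposes the ambiguity, so the two parts together cover all cases.
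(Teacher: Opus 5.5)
Your Part~1 matches the paper's argument. For Part~2 you take a genuinely different and more careful route.

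The paper telescopes the assumed decrease to get $\mathbb{E}[H_t]\le H_0-t\delta$ ``until reaching $\epsilon$'' and then reads off $t\ge (H_0-\epsilon)/\delta$. That step has two defects, and your argument repairs both.

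\emph{Conditioning.} The decrease is only assumed on the random event $\{H_{t-1}>\epsilon\}$. What legitimately telescopes is therefore $\mathbb{E}[H_{t-1}-H_t]\ge\delta\,P(H_{t-1}>\epsilon)$. Summing this gives exactly your inequality $\delta\,\mathbb{E}[t\wedge\tau]\le H_0-\mathbb{E}[H_{t\wedge\tau}]$, so your supermartingale $Z_t$ is the rigorous form of the paper's telescoping.

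\emph{Expectation versus paths.} A bound on expected entropy does not put every trajectory below $\epsilon$, and your caution here is warranted. Take two hypotheses with uniform prior, $\epsilon=\delta=0.3$, and selected claims with $P(X_c=1\mid a_1)=1$ and $P(X_c=1\mid a_2)=0.5,\,0.5,\,0.3$ at steps $1,2,3$. Under exact Bayes updating, each step has expected decrease above $0.3$ (about $0.31$, $0.32$, $0.41$). Yet with probability about $0.54$ the entropy is still about $0.365>\epsilon$ after $\lceil 0.7/0.3\rceil=3$ evaluations. Even $\mathbb{E}[H_3]\approx 0.20$ exceeds $H_0-3\delta=0.1$, so the paper's intermediate bound also fails.

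So the deterministic count does not follow from an expected-decrease hypothesis, and your split is the right fix: $\mathbb{E}[\tau]\le H_0/\delta$ under the conditional expectation hypothesis, and $\lceil (H_0-\epsilon)/\delta\rceil$ under a pathwise decrease. The paper's route buys brevity and matches the theorem's wording. Yours buys statements that are actually true, at the cost of weakening either the conclusion (to an expected hitting time) or the hypothesis (to a pathwise decrease).

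Minor points:
\begin{itemize}
\item The bound $H_0-\mathbb{E}[H_{t\wedge\tau}]\le H_0$ follows from $H\ge 0$, not from the event $\{\tau>t\}$.
\item The overshoot term in your refined bound should be $(\epsilon-\mathbb{E}[H_\tau])/\delta$.
\item Both your argument and the paper's tacitly assume the loop is not cut off by $T$ or by pool exhaustion before $\lceil (H_0-\epsilon)/\delta\rceil$ iterations. Your $t\to\infty$ limit concerns the untruncated process. The requirement $\min(T,|\mathcal{C}_{cand}|)\ge\lceil (H_0-\epsilon)/\delta\rceil$ should therefore be stated explicitly.
\item As you note, only the entropy half of the stopping rule is established, not $\neg\mathrm{Conflict}(X_{eval})$.
\end{itemize}
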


We emphasize that this result characterizes sufficient conditions for convergence under informative evidence selection, rather than a minimax or adversarial worst-case guarantee. When explicit contradictions exist in the evidence, the sufficient conditions for convergence are intentionally violated, and ECR terminates by exposing ambiguity rather than collapsing the posterior.
\begin{proof}
The first statement holds because each iteration evaluates and removes at most one claim, and the loop is explicitly capped by $T$. For the second statement, telescoping the assumed expected entropy decrease yields $\mathbb{E}[H_t] \le H_0 - t\delta$ until reaching $\epsilon$, hence $t \ge (H_0-\epsilon)/\delta$ suffices.
\end{proof}

\section{System Integration: ECR within CSGR++}

To evaluate ECR beyond isolated theoretical constraints, we integrated it into a
production-grade, multi-strategy retrieval pipeline. While ECR is algorithmically
orthogonal to any specific retriever, we utilize the CSGR++ architecture as our
primary testbed. In this section, we describe the surrounding system components
that generate, structure, and verify the atomic candidate claims consumed by the
ECR inference loop. Figure~\ref{fig:architecture} illustrates the resulting
end-to-end architecture and the position of ECR within it.

\begin{figure}[t]
\centering
\begin{tikzpicture}[
    node distance=2.2cm,
    every node/.style={draw, rectangle, rounded corners, minimum height=1.1cm, align=center},
    arrow/.style={->, thick}
]

\node (query) {Query};
\node (retrieval) [below of=query] {Ensemble Retrieval\\\small (Vector | Graph | Claim)};
\node (ecr) [below of=retrieval, fill=blue!10] {\textbf{Entropic Claim Resolution}\\\small Entropy-Guided Selection};
\node (synth) [below of=ecr] {Response Synthesis};

\draw[arrow] (query) -- (retrieval);
\draw[arrow] (retrieval) -- (ecr);
\draw[arrow] (ecr) -- node[right, xshift=2pt] {\small epistemic sufficiency} (synth);

\node (hyp) [right=3.6cm of ecr, draw=none, align=left] {
\small\textbf{Inside ECR:}\\
\small Hypothesis space $\mathcal{A}$\\
\small Posterior $P(A \mid X)$\\
\small EER-based claim selection
};

\draw[dashed] (ecr.east) -- (hyp.west);

\end{tikzpicture}
\caption{
System overview: Entropic Claim Resolution (ECR) operates as an inference-time controller between competitive retrieval and answer synthesis. Given a retrieved claim set, ECR sequentially selects evidence to minimize hypothesis entropy and terminates when epistemic sufficiency is reached.
}
\label{fig:architecture}
\end{figure}
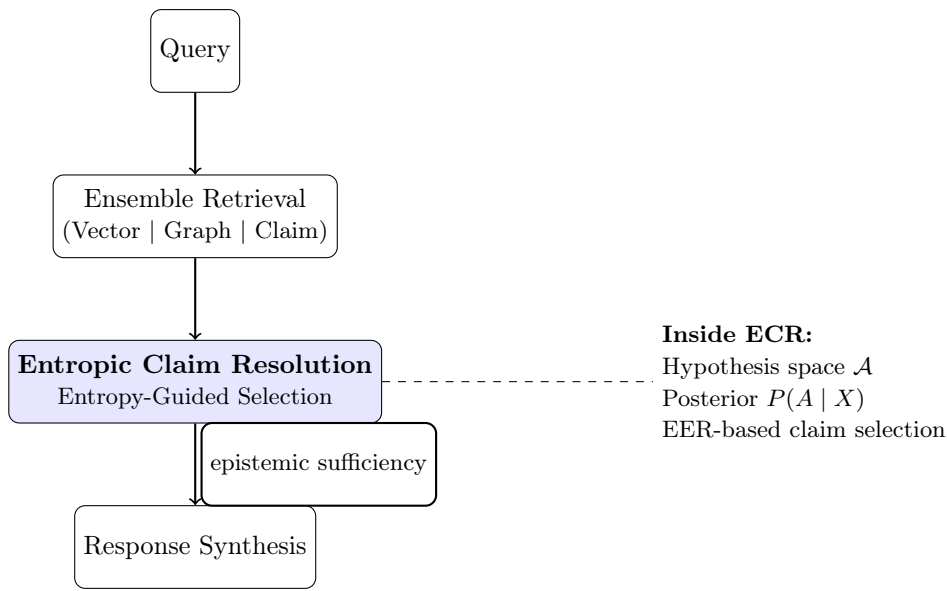

\subsection{HyRAG v3 Ingestion and Index Construction}

\paragraph{Structured and Tabular Data as First-Class Evidence.}
HyRAG v3 natively supports structured and semi-structured tabular data, rather than treating tables as flattened text. During ingestion, the system performs automatic schema inference, including column typing (numeric, categorical, temporal), identifier detection, and time-series normalization. Individual table cells and derived aggregates are materialized as atomic claims with explicit provenance, row identifiers, column metadata, and canonical time keys. Structured aggregation queries are grounded through a text-to-SQL execution path with guarded, read-only execution and validation against real table values. All tabular claims enter the same inference-time evidence pool as textual and graph-derived claims, allowing Entropic Claim Resolution to reason uniformly over mixed structured and unstructured evidence. This design enables precise numeric grounding, temporal filtering, and auditable reasoning not natively supported by graph-enhanced RAG systems that operate over synthesized document summaries.

\paragraph{Vector-Based Retrieval as a Core Substrate.}
HyRAG v3 fully incorporates dense vector retrieval as a primary evidence acquisition mechanism. Raw document chunks, atomic claims, and synthesized summaries are embedded into dedicated vector indices and queried using cosine similarity with optional metadata and identifier filtering. Vector retrieval is used to seed claim pools, initialize hypothesis construction, and ground subsequent structured and graph-based reasoning. Rather than assuming vector similarity implies evidentiary sufficiency, HyRAG v3 subjects all vector-retrieved candidates to inference-time evaluation under Entropic Claim Resolution, allowing relevance-based signals to be retained while preventing overconfidence in semantically similar but non-discriminative evidence.

ECR operates at inference time, but its effectiveness depends on upstream ingestion and indexing that preserve atomicity, provenance, and temporal structure. The implemented HyRAG v3 pipeline (in our reference implementation) performs the following steps.

\paragraph{Auto-adaptive schema inference with feedback calibration.}
An \texttt{AutoAdaptAgent} infers a schema from CSV/Excel/PDF/DataFrame inputs, identifying an ID column, categorical columns, numeric columns, and time-series columns. A subsequent \emph{schema feedback loop} performs a dry-run parse of the first $N$ rows (configurable) and adjusts misclassified columns (e.g., ``numeric'' columns with excessive null-rates), producing a corrected schema used for full ingestion.

\paragraph{Robust parsing with repeated-header detection and temporal normalization.}
The ingestion parser supports multiple formats and implements spreadsheet-specific heuristics, including merging complementary multi-row headers and skipping repeated header rows using an overlap threshold ($\ge 0.70$ token overlap). Time-series columns are normalized via a data-driven time-key parser that recognizes patterns such as years (e.g., 2024), quarters (e.g., 2024Q1), halves (e.g., 2024H2), and trailing windows (e.g., LTM/TTM), and maps them to a canonical order key used for temporal slicing.

\paragraph{EAV SQLite store with safe query execution.}
All ingested records are persisted in an Entity--Attribute--Value SQLite backend (\texttt{GenericStore}). For downstream aggregation queries, the system exposes a text-to-SQL route but enforces a strict \texttt{SELECT}-only guardrail: the SQL executor blocks write operations and limits result sizes.

\paragraph{Embeddings and vector indices with deterministic fallbacks.}
The embedding subsystem is three-tiered: an online embedding API (if available), a local sentence-transformer fallback, and a deterministic hashed-vector fallback for fully offline operation. Vector indices support an optional database backend (LanceDB when installed) and a pure NumPy cosine-similarity backend otherwise; both support ID filtering for category/time constraints.

\paragraph{Atomic claim extraction and claim index.}
During ingestion, the system extracts atomic claims, entities, and lightweight semantic relations $(h,r,t)$ into a dedicated \texttt{ClaimStore}. Claim vectors are embedded and stored in a separate claim vector index to enable claim-first retrieval.

\paragraph{Hierarchical summarization as retrievable nodes.}
To improve global recall, the system clusters embedded row representations using a pure-NumPy $k$-means routine (no external ML dependencies), summarizes each cluster (LLM when available), re-embeds the summaries, and inserts them into the same row-level vector index under a reserved ID prefix. As a result, standard vector retrieval can surface both raw rows and higher-level cluster summaries.
Cluster summaries are stored as first-class retrievable nodes and compete directly with raw rows during vector retrieval.

ECR is exclusively activated on the analytical \texttt{CSGR\_PLUS} route selected by the upstream query router, and is bypassed for \texttt{LOOKUP}, \texttt{RELATIONAL}, \texttt{SEMANTIC}, \texttt{TOOL}, and \texttt{SQL} routes.

External tools are treated as deterministic operators outside the entropy-driven reasoning loop (the LLM only formats a JSON tool call when available, with an offline numeric-statistics fast-path), i.e., excluded from ECR's epistemic modeling rather than treated as competing uncertainty-reduction actions.

To bound computational overhead, ECR is invoked dynamically strictly when the retrieved configuration exhibits high epistemic uncertainty. The trigger conditions are natively integrated via three heuristics:
\begin{enumerate}
    \item \textbf{High Claim Volume:} The retriever fetches heavily saturated candidate spaces ($>15$ claims).
    \item \textbf{Syntactic Ambiguity:} Detection of uncertainty keywords within the active query (e.g., ``uncertain'', ``conflicting'', ``disagree'', ``multiple'', ``various'').
    \item \textbf{Confidence Variance Constraint:} The variance in micro-level claim confidence $\sigma^2$ across the $k$ retrieved claims exceeds an empirical threshold of $0.15$ (with confidence actively tied to tracking topological support-contradiction metrics within the underlying datastore).
\end{enumerate}

To evaluate ECR in a high-performance setting, we implement it as a standalone and modular resolution engine within a production-grade Context-Seeded Graph Retrieval (CSGR++) architecture. While ECR is algorithmically orthogonal to any specific retriever, CSGR++ serves as a rigorous experimental testbed that preserves atomicity, provenance, and multi-strategy retrieval signals. Knowledge is extracted and stored as atomic semantic claims in an Entity-Attribute-Value (EAV) backend, accompanied by separate vector indices for raw rows and claims.

Within this testbed, the baseline multi-strategy \emph{EnsembleRetriever} combines dense similarity search, structural graph expansion, and semantic claim matching using Reciprocal Rank Fusion. ECR cleanly intercepts the pipeline immediately after candidate generation, acting as an isolated inference-time uncertainty resolution stage that outputs either a dominant hypothesis or a calibrated set of alternatives for downstream synthesis.

\subsection{CSGR++ Backbone Architecture}
While ECR is algorithmically orthogonal to a particular retrieval stack, we implement and evaluate it inside a production-grade pipeline (CSGR++) that is explicitly claim-centric.

\paragraph{Atomic claim store with semantic relations.}
CSGR++ stores extracted claims in a SQLite-backed \emph{ClaimStore} with fields for (i) claim text, (ii) entity mentions, (iii) time keys / order keys for temporal slicing, and (iv) dynamically updated confidence signals. In addition, a lightweight semantic relation table stores tuples $(h, r, t)$ extracted during claim extraction (e.g., \textsc{Acquires}, \textsc{Impacts}, \textsc{CausedBy}), enabling entity-based expansion during retrieval.

\paragraph{Temporal intelligence.}
Queries are parsed for explicit time constraints (e.g., ``in 2024'', ``2024Q1'', ``last 3 quarters'', ``since 2022'') and converted into an order-key interval $(\tau_{min},\tau_{max})$. Claim retrieval can then apply a hard filter over the claim IDs inside the selected time window.

\paragraph{Competitive ensemble retrieval and Reciprocal Rank Fusion (RRF).}
The retriever runs multiple strategies (vector retrieval over rows, vector retrieval over claims, and graph/category traversal) and fuses the per-strategy rankings via Reciprocal Rank Fusion (RRF). For an item $d$ and ranking lists $\{L_j\}_{j=1}^m$ with ranks $r_j(d)\in\{1,2,\ldots\}$, the fused score is
\begin{equation}
\label{eq:rrf}
\text{RRF}(d) = \sum_{j=1}^{m} \frac{1}{k + r_j(d)},
\end{equation}
where $k$ is a dampening constant (we use $k=60$ in code).

\paragraph{Competitive strategy scoring (selection, not only fusion).}
In addition to fusing rankings, the retriever scores each strategy to identify a ``best'' strategy for the query. The implemented scoring combines (i) average similarity score, (ii) a diversity proxy based on unique source items, and (iii) average claim confidence (when applicable) via a weighted sum.

Beyond rank fusion, this strategy scoring identifies the dominant evidence view for a query, enabling adaptive retrieval-path selection rather than blindly trusting an ensemble.

\paragraph{Relation-based expansion for multi-hop analytical queries.}
For analytical (CSGR++) queries, the system extracts frequent entities from initially retrieved claims, then expands the evidence set by retrieving related claims via the relation table (one-hop expansion), discounting confidence slightly for expanded claims.

\paragraph{Dynamic confidence micro-learning.}
Claims maintain support and contradiction counters. When verification indicates a claim was supported or contradicted, the system updates its confidence with a bounded, asymmetric rule:
\begin{equation}
\label{eq:dynconf}
\text{conf}_{new}(c) = \text{clip}_{[0,1]}\Big(\text{conf}_{base}(c) + 0.15\,\log(1+S(c)) - 0.25\,C(c)\Big).
\end{equation}
This produces an online ``micro-learning'' effect: frequently supported claims become easier to trust, while contradicted claims are rapidly down-weighted.

Because claim confidence is updated online and directly affects future $\widehat{\mathrm{EER}}(c)$ scores (Eq.~\ref{eq:eer_impl}), ECR exhibits lightweight inference-time learning behavior across queries.

\paragraph{Trust modes (graded verification).}
The query router classifies user intent into trust modes (\emph{strict} for regulatory or numerical precision, \emph{balanced}, and \emph{exploratory}), which modulate verification aggressiveness and synthesis style.

\paragraph{ReverseVerifier: deterministic numeric grounding + claim-aware checking.}
Beyond probabilistic resolution, CSGR++ applies a three-layer ReverseVerifier: (i) a deterministic numeric grounding pass that extracts all numeric tokens in a draft answer and checks verbatim presence in retrieved evidence, (ii) LLM-based claim-by-claim judgement with both supporting and counter-evidence retrieval, and (iii) a combined score where numeric failures cap the maximum achievable verification score. Numeric grounding is enforced as a hard constraint: a single unsupported numeric token caps downstream verification scores.

Table~\ref{tab:components} summarizes the major subsystems of the full HyRAG v3 and CSGR++ pipeline and their respective roles, providing a compact overview of how ECR integrates into the surrounding retrieval, verification, and synthesis infrastructure.

\begin{table}[t]
\centering
\caption{Key subsystems implemented in our system that support ECR and the full end-to-end pipeline.}
\label{tab:components}
\begin{tabular}{p{0.42\linewidth}p{0.53\linewidth}}
\toprule
Subsystem & Role in the pipeline \\
\midrule
AutoAdaptAgent + SchemaFeedbackLoop & Schema inference with dry-run calibration \\
GenericStore (EAV SQLite) & Item/attribute persistence; safe \texttt{SELECT}-only SQL execution \\
EmbeddingProvider + VectorIndex & 3-tier embeddings; LanceDB/NumPy backends; ID-filtered cosine search \\
ClaimExtractor + ClaimStore & Atomic claims + relations + temporal keys + dynamic confidence \\
EnsembleRetriever & Competitive retrieval + RRF fusion (Eq.~\ref{eq:rrf}) \\
EntropicClaimResolver & ECR loop: entropy, EER selection (Eq.~\ref{eq:eer_impl}) \\
StructuredSynthesizer & Structured analytical brief with evidence bullets \\
ReverseVerifier & Numeric grounding + claim-aware verification and score capping \\
RAGAnswerer & Multi-hop, HyDE, text-to-SQL grounding, CRAG self-correction, citations \\
\bottomrule
\end{tabular}
\end{table}

\subsection{Supporting RAG Components}
Outside the CSGR++ analytical route, the implementation includes a general-purpose RAG engine that packages standard, widely used RAG mechanisms behind a single \texttt{answer()} interface. These components are supporting infrastructure and are orthogonal to ECR.

The system also supports generator-based streaming responses (via \texttt{answer\_stream} entry-points), which is orthogonal to ECR and not evaluated in this work.\footnote{All major components admit deterministic fallbacks when LLMs are unavailable (e.g., hashed embeddings and heuristic claim extraction), though answer quality may degrade.}

\paragraph{Multi-hop retrieve--reason--retrieve.}
The engine iteratively retrieves candidates and, when online, generates a follow-up query conditioned on current evidence, stopping early when additional hops yield no new items.

\paragraph{HyDE query embedding.}
To improve recall under distribution shift between user queries and row-shaped embeddings, the engine optionally generates a short hypothetical ``answer row'' and embeds that text (HyDE) to drive vector search.

\paragraph{Cross-encoder reranking and calibrated abstention.}
Candidates are reranked either by cosine similarity (offline) or by an LLM ``cross-encoder'' that outputs a ranking and confidence. A calibrated confidence score combines the number of retrieved results, the top similarity score, the reranker confidence, and a query-complexity penalty; the system abstains when the calibrated score is low.

\paragraph{Text-to-SQL with value grounding.}
For aggregation queries, the engine routes to text-to-SQL and applies a second grounding pass that validates every generated string literal against real categorical values in the database; when an unknown literal is detected, it is rewritten to the closest fuzzy match when possible.

\paragraph{CRAG self-correction with schema evolution signals.}
When reverse verification returns \texttt{fail} or \texttt{weak}, the engine performs up to two correction attempts by rewriting the query to target the verification gap. Each failure can be recorded by a schema evolution tracker that increments per-column failure counts and can request LLM-based reclassification suggestions once a threshold is exceeded.
Schema evolution signals persist across queries, enabling long-term self-correction.

\section{Experimental Design \& Evaluation}

While Section~4 outlines the deployment of ECR within a full-scale production
architecture, evaluating the algorithm end-to-end immediately introduces
confounding variables from upstream retrieval recall and downstream LLM
generation quality. To rigorously validate the decision-theoretic properties
established in Section~3, our evaluation strategy proceeds in two phases. First,
we strictly isolate the mathematical behavior of the entropy-driven claim
selection policy using a controlled, claims-only harness
(Sections~5.1--5.3). Second, we reintegrate ECR into an end-to-end reasoning
pipeline to evaluate its impact under realistic multi-hop and contradiction-heavy
settings (Section~5.4).

\subsection{Controlled claims-only harness}
Our ``claims-only'' harness fixes the dataset, query set, retrieval configuration, candidate claim pool, and Bayesian entropy model; only the claim-selection policy differs.
This allows a clean measurement of whether a policy is actually minimizing epistemic uncertainty as defined by Eq.~\ref{eq:entropy}.

\paragraph{Dataset and cases.}
We use a small, multi-table business dataset of six CSV tables (\texttt{sales}, \texttt{customers}, \texttt{expenses}, \texttt{inventory}, \texttt{hr}, \texttt{marketing}) and 80 templated evaluation queries spanning single-table lookups and cross-table comparisons.

\paragraph{Hypotheses and initial entropy.}
For each query, the harness constructs $|\mathcal{A}|=3$ mutually exclusive answer hypotheses, yielding an initial entropy of $H_0=\log_2 3 \approx 1.585$ bits.

\paragraph{Candidate claims and policies.}
For each case, we retrieve the same top-20 candidate claims (high-recall candidate generation).
We then compare three policies: (i) \textbf{Retrieval-only}, which takes the top-15 claims by retrieval score under a fixed budget; (ii) \textbf{ECR}, which sequentially selects the next claim by expected entropy reduction and stops when $H\le\epsilon$ with $\epsilon=0.3$ bits (capped at 10 iterations); and (iii) \textbf{Random control}, which samples claims uniformly without replacement from the same candidate pool, matching ECR's realized claim budget of 5 claims.

\paragraph{Entropy-aligned metrics.}
We report (i) final entropy, (ii) entropy drop per evaluated claim, (iii) claims-to-collapse (first step reaching $H\le\epsilon$, else budget$+1$), (iv) effective hypotheses ($2^H$), and (v) entropy trace variance.
We additionally report two diversity-oriented diagnostics (claim redundancy and source entropy) to illustrate that diversity alone is not equivalent to epistemic resolution.
Finally, we report \emph{hypothesis-conditioned redundancy} (HypCondRed.), which computes redundancy within claim groups attributed to the same answer hypothesis (rather than across the full mixed set).

\subsection{Main results (seed=7, 80 cases)}
Table~\ref{tab:claims_only_results} summarizes mean $\pm$ std across cases.
ECR reliably reaches epistemic sufficiency using 5 claims, driving $H$ below $\epsilon$; retrieval-only does not reduce entropy under the same posterior model; random improves modestly but typically does not collapse.

\begin{table}[t]
\centering
\small
\resizebox{\linewidth}{!}{%
{\setlength{\tabcolsep}{3pt}\renewcommand{\arraystretch}{0.95}%
\begin{tabular}{@{}lrrrrrrrr@{}}
\toprule
\textbf{Policy} & \textbf{Claims} & \textbf{$H_{final}$} & \textbf{$\Delta H$/claim} & \textbf{Collapse} & \textbf{$2^{H_{final}}$} & \textbf{Redund.} & \textbf{HypCondRed.} & \textbf{SrcEnt}\\
\midrule
Retrieval-only & $15.0\pm0.0$ & $1.585\pm0.000$ & $0.0000\pm0.0000$ & $16.0\pm0.0$ & $3.000\pm0.000$ & $0.684\pm0.119$ & $0.662\pm0.110$ & $0.342\pm0.510$\\
ECR & $5.0\pm0.0$ & $0.2129\pm0.0000$ & $0.2744\pm0.0000$ & $5.0\pm0.0$ & $1.159\pm0.000$ & $0.672\pm0.125$ & $0.672\pm0.125$ & $0.276\pm0.443$\\
Random & $5.0\pm0.0$ & $1.243\pm0.289$ & $0.0684\pm0.0577$ & $6.0\pm0.0$ & $2.411\pm0.437$ & $0.658\pm0.118$ & $0.653\pm0.123$ & $0.354\pm0.527$\\
\bottomrule
\end{tabular}}}
\caption{Claims-only evaluation (80 cases, seed=7). ``Claims'' is the number of evaluated claims. $H_{final}$ is final answer-hypothesis entropy in bits. ``Collapse'' is claims-to-collapse (first step where $H\le\epsilon=0.3$; else budget$+1$). ``Redund.'' is claim redundancy, ``HypCondRed.'' is hypothesis-conditioned claim redundancy, and ``SrcEnt'' is source entropy (diagnostic diversity metrics).}
\label{tab:claims_only_results}
\end{table}

Across these runs, claim-coverage is identical across policies (0.6375 on average), reflecting that this harness is designed to stress epistemic resolution rather than maximize overlap with a small set of expected claim snippets.

\subsection{Robustness across random seeds (seeds 0--4)}
To ensure the random-control comparison is not a single-seed artifact, we rerun the claims-only harness for five random seeds (0--4), reusing the same frozen dataset, query set, candidate claims, and posterior model.
Retrieval-only and ECR are deterministic under this setup, while the random baseline varies by construction.

Table~\ref{tab:multiseed_results} confirms the stability of ECR across multiple seeds. Furthermore, Figure~\ref{fig:convergence} illustrates the schematic entropy trajectories of these competing policies, highlighting how rapidly ECR drives the hypothesis space below the $\epsilon$ threshold compared to relevance-only baselines.

\begin{table}[t]
\centering
\small
\resizebox{\linewidth}{!}{%
{\setlength{\tabcolsep}{4pt}\renewcommand{\arraystretch}{0.95}%
\begin{tabular}{@{}lrrrrrr@{}}
\toprule
\textbf{Policy} & \textbf{$H_{final}$} & \textbf{$\Delta H$/claim} & \textbf{Collapse} & \textbf{$2^{H_{final}}$} & \textbf{TraceVar} & \textbf{HypCondRed.}\\
\midrule
Retrieval-only & $1.585\pm0.000$ & $0.0000\pm0.0000$ & $16.00\pm0.00$ & $3.000\pm0.000$ & $0.002987\pm0.000000$ & $0.6619\pm0.0000$\\
ECR & $0.2129\pm0.0000$ & $0.2744\pm0.0000$ & $5.00\pm0.00$ & $1.159\pm0.000$ & $0.262859\pm0.000000$ & $0.6719\pm0.0000$\\
Random & $1.2628\pm0.0265$ & $0.0644\pm0.0053$ & $5.995\pm0.006$ & $2.436\pm0.044$ & $0.03210\pm0.00374$ & $0.6401\pm0.0075$\\
\bottomrule
\end{tabular}}}
\caption{Multi-seed robustness (seeds 0--4): mean $\pm$ std over seeds of the seed-level mean metrics. Only the random baseline changes across seeds in this frozen setup. ``HypCondRed.'' is hypothesis-conditioned claim redundancy.}
\label{tab:multiseed_results}
\end{table}

\begin{figure}[ht]
\centering
\resizebox{\linewidth}{!}{%
\begin{tikzpicture}[x=1.0cm,y=3.6cm]
\draw[->] (0,0) -- (10.5,0) node[below right] {Evidence Claims Actively Evaluated ($t$)};
\draw[->] (0,0) -- (0,1.9) node[above left, align=left] {Hypothesis Entropy\\$H(A\mid X_{eval})$ (bits)};

\foreach \x in {0,1,...,10} {
    \draw[gray!25] (\x,0) -- (\x,1.8);
    \draw (\x,0) -- (\x,-0.02);
    \node[below] at (\x,-0.02) {\scriptsize \x};
}
\foreach \y/\ylab in {0/0.0,0.3/0.3,0.6/0.6,0.9/0.9,1.2/1.2,1.5/1.5,1.8/1.8} {
    \draw[gray!25] (0,\y) -- (10,\y);
    \draw (0,\y) -- (-0.06,\y);
    \node[left] at (-0.06,\y) {\scriptsize \ylab};
}

\draw[black, dotted, thick] (0,0.30) -- (10,0.30);
\node[above] at (5,0.30) {\scriptsize Epistemic sufficiency ($\epsilon=0.3$)};

\draw[blue, thick] plot coordinates {(0,1.585) (2,0.90) (4,0.35) (5,0.213)};
\foreach \x/\y in {0/1.585,2/0.90,4/0.35,5/0.213} {
    \fill[blue] (\x,\y) circle (1.6pt);
}

\draw[red, thick, dashed] plot coordinates {(0,1.585) (2,1.45) (4,1.30) (5,1.26)};
\foreach \x/\y in {0/1.585,2/1.45,4/1.30,5/1.26} {
    \fill[red] (\x,\y) rectangle +(0.08,0.04);
}

\begin{scope}[shift={(6.0,1.62)}]
    \draw[blue, thick] (0,0.14) -- (0.9,0.14);
    \fill[blue] (0.45,0.14) circle (1.6pt);
    \node[anchor=west] at (1.0,0.14) {\scriptsize ECR (schematic)};

    \draw[red, thick, dashed] (0,0) -- (0.9,0);
    \fill[red] (0.43,-0.02) rectangle +(0.08,0.04);
    \node[anchor=west] at (1.0,0) {\scriptsize Random/retrieval (schematic)};
\end{scope}
\end{tikzpicture}}%
\caption{Schematic entropy trajectories consistent with the measured endpoints: ECR reaches $H\le\epsilon$ quickly, whereas relevance-only and random baselines typically remain above $\epsilon$ at matched claim budgets.}
\label{fig:convergence}
\end{figure}
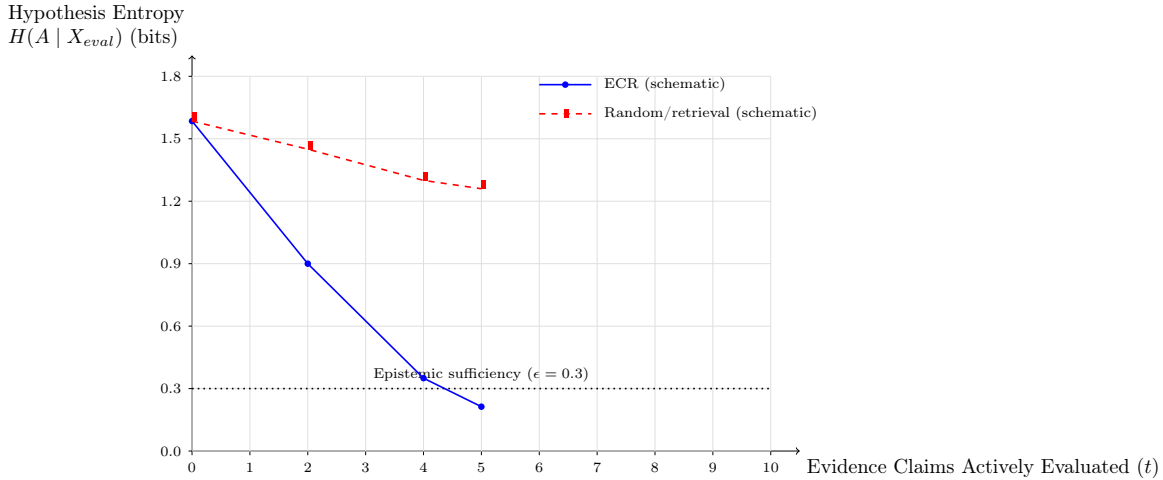

\subsection{End-to-End Evaluation on a Standard Multi-Hop QA Benchmark}
In contrast to the preceding controlled, claims-only experiments, this evaluation reintegrates a live large language model into the inference loop, exercising ECR as an online evidence-selection controller during end-to-end RAG generation.
As an additional experiment, to evaluate whether entropy-guided evidence selection improves downstream answer quality, we conduct an end-to-end evaluation on a HotpotQA-style multi-hop QA benchmark. All methods share the same retriever, language model, candidate evidence pool, and decoding parameters; the only variable is the inference-time claim selection policy. We evaluate three policies: (i) a relevance-based baseline RAG policy, (ii) a random selection control matched to the same average claim budget, and (iii) ECR, which applies entropy-guided selection with stopping. We report exact match (EM), token F1, and an evidence faithfulness proxy based on answer-token coverage, alongside the average number of claims used. Because HotpotQA exhibits substantially higher linguistic variance and more complex multi-hop dependencies than highly structured tabular datasets, the ECR algorithm naturally evaluates a larger number of claims before the hypothesis entropy collapses below $\epsilon$.

\begin{table}[ht]
\centering
\small
\begin{tabular}{lrrrr}
\toprule
\textbf{Method} & \textbf{Avg. Claims Used} & \textbf{Exact Match (EM) $\uparrow$} & \textbf{Token F1 $\uparrow$} & \textbf{Evidence Faithfulness $\uparrow$}\\
\midrule
Baseline RAG & 19.87 & 0.313 & 0.459 & 0.639\\
Random Control & 19.87 & 0.207 & 0.307 & 0.427\\
ECR (ours) & 19.68 & 0.297 & 0.450 & 0.626\\
\bottomrule
\end{tabular}
\caption{End-to-End Evaluation on HotpotQA-Style Multi-Hop QA (300 Questions). All methods use the same retriever and language model; only the inference-time evidence selection policy differs. ECR substantially outperforms random selection while maintaining performance comparable to a strong relevance-based baseline.}
\label{tab:hotpot_e2e_results}
\end{table}

Table~\ref{tab:hotpot_e2e_results} shows that ECR substantially outperforms random selection across all reported metrics, confirming that entropy-guided evidence selection is consistently more effective than unguided or diversity-only strategies. Relative to a strong relevance-based baseline, ECR remains within a small margin on EM and F1, indicating that enforcing epistemic control does not significantly degrade answer accuracy on standard benchmarks.

It is important to note that HotpotQA is a largely factual and relevance-oriented benchmark with predominantly singular ground truths. As such, it does not natively stress-test contradictory evidence or fundamentally ambiguous queries, which are precisely the regimes ECR is designed to address. Achieving near parity on such a saturated benchmark while enforcing strict inference-time epistemic constraints demonstrates that ECR integrates robust uncertainty control without reliance on benchmark-specific tuning. Future evaluations will focus on conflict-heavy or ambiguity-oriented benchmarks where relevance-driven retrieval is known to exhibit epistemic collapse.

\paragraph{Robustness to Noisy Evidence.}
To isolate a regime that is closer to real deployments, where retrieved evidence may include irrelevant or even contradictory content, we perform a controlled ablation on the same HotpotQA evaluation set and pipeline as above, injecting noise \emph{after retrieval and before evidence selection}\footnote{Because noise is injected by replacing a fraction of candidate claims, this protocol may remove gold evidence for some queries. Consequently, Exact Match under heavy corruption reflects robustness to partial evidence loss rather than distractor filtering.}. For each query, we take the retrieved candidate claim set and replace 40\% of candidates with claims sampled from a noise pool constructed from unrelated documents (keeping the retriever, LLM, prompts, decoding, and ECR selection logic unchanged). Table~\ref{tab:hotpot_noise_robustness} reports Exact Match (EM) and Evidence Faithfulness for baseline relevance-based RAG and ECR under no noise versus 40\% noise.

Under this corruption regime, Exact Match necessarily degrades for both systems, as replacing a fraction of candidate claims can remove ground-truth evidence from the pool. Notably, ECR exhibits predictable degradation to the relevance-based baseline without amplifying noise-induced errors, despite enforcing strict inference-time stopping and evaluating fewer claims. This result indicates that entropy-guided evidence selection remains well-behaved under partial evidence loss, avoiding overconfident hallucination or unstable collapse when the available evidence becomes incomplete or unreliable.

We emphasize that this ablation evaluates robustness to evidence corruption (i.e., partial removal of valid claims), rather than distractor accumulation, which isolates a complementary but distinct failure mode.

\begin{table}[t]
\centering
\small
\begin{tabular}{lrrrr}
\toprule
\textbf{Method} & \textbf{EM (No Noise)} & \textbf{Faith (No Noise)} & \textbf{EM (40\% Noise)} & \textbf{Faith (40\% Noise)}\\
\midrule
Baseline RAG & 0.323 & 0.660 & 0.167 & 0.345\\
ECR (ours) & 0.307 & 0.657 & 0.163 & 0.331\\
\bottomrule
\end{tabular}
\caption{Robustness ablation on HotpotQA-style evaluation (300 questions) with noise injected \emph{after retrieval and before evidence selection}. ``40\% Noise'' replaces 40\% of retrieved candidate claims with unrelated (potentially contradictory) claims sampled from a noise pool. Only baseline relevance-based RAG and ECR are evaluated; the retriever, LLM, prompts, and decoding are unchanged. (Performance is bounded above when ground-truth evidence is removed.)}
\label{tab:hotpot_noise_robustness}
\end{table}

\paragraph{Offline Robustness Under Structured Contradiction.}
Standard QA benchmarks predominantly evaluate answer accuracy under relatively clean evidence conditions. To stress-test the epistemic-control mechanism itself---independently of LLM semantics---we run a fully offline, deterministic contradiction-injection ablation on the same 300-question HotpotQA-style set and retrieval pipeline. For each query, we take the retrieved candidate claim pool and inject paired, explicit contradiction twins into the candidate set at rate $\alpha\in\{0.0,0.3,0.5\}$ after retrieval and before evidence selection. In offline mode, hypothesis initialization uses deterministic hashed embeddings and claim verification uses a deterministic provenance proxy; this isolates controller behavior from verifier quality.

We report (i) \textbf{Ambiguity Exposure}---whether the run ends with $H>\epsilon$ or an unresolved explicit contradiction pair---and (ii) \textbf{Overconfident Error}---cases where the system outputs a dominant hypothesis despite being wrong (a proxy for epistemic collapse). Table~\ref{tab:offline_contradiction_injection} shows a sharp regime shift: baseline relevance-based RAG remains pathologically overconfident and flat across $\alpha$, while ECR transitions from fast epistemic sufficiency in the clean regime ($\alpha=0.0$) to principled non-convergence under contradiction ($\alpha\ge 0.3$). At $\alpha\ge 0.3$, ambiguity emerges deterministically for every query and termination is entirely explained by unresolved conflict rather than heuristic budget limits. This extreme ambiguity rate is expected: once an explicit contradiction pair is present in the evaluated evidence, epistemic coherence is unattainable by definition. Likewise, entropy remains high because ECR is not an entropy minimizer ``at all costs''; it is a coherence-constrained entropy controller.

\begin{table}[t]
\centering
\small
\begin{tabular}{llrrrrl}
\toprule
\textbf{Method} & $\boldsymbol{\alpha}$ & \textbf{EM} & \textbf{OverconfErr} & \textbf{AmbExp} & \textbf{Mean $H$} & \textbf{Stop Reason}\\
\midrule
Baseline RAG & 0.0 & 0.0067 & 0.9933 & 0.0000 & -- & fixed\_budget (300/300)\\
Baseline RAG & 0.3 & 0.0067 & 0.9933 & 0.0000 & -- & fixed\_budget (300/300)\\
Baseline RAG & 0.5 & 0.0067 & 0.9933 & 0.0000 & -- & fixed\_budget (300/300)\\
\midrule
ECR (ours) & 0.0 & 0.0000 & 0.9900 & 0.0100 & 0.226 & epistemic\_sufficiency (297/300)\\
ECR (ours) & 0.3 & 0.0067 & 0.0000 & 1.0000 & 1.496 & unresolved\_conflict (300/300)\\
ECR (ours) & 0.5 & 0.0067 & 0.0000 & 1.0000 & 1.458 & unresolved\_conflict (300/300)\\
\bottomrule
\end{tabular}
\caption{Offline contradiction-injection ablation (300 questions). Paired contradictions are injected into the candidate claim pool at rate $\alpha$ after retrieval and before evidence selection. EM is reported only as a sanity anchor under a deterministic offline answerer; the key signals are Ambiguity Exposure and Overconfident Error (epistemic collapse). ECR exhibits a phase transition from epistemic sufficiency to principled non-convergence as contradictions accumulate, while baseline RAG remains uniformly overconfident. Counts indicate number of runs terminating for each reason.}
\label{tab:offline_contradiction_injection}
\end{table}

Exploring complementary ambiguity-focused benchmarks and distractor accumulation regimes remains an important direction for future evaluation.

\section{Conclusion}

\subsection*{Summary}

Entropic Claim Resolution introduces a principled inference-time perspective on Retrieval-Augmented Generation, reframing evidence selection as a process of epistemic uncertainty reduction rather than relevance maximization. By directly optimizing Expected Entropy Reduction over atomic claims, ECR provides a mathematically grounded mechanism for determining both which evidence to evaluate next and when sufficient evidence has been accumulated to justify synthesis.

Empirically, we show that this entropy-driven framework reliably collapses hypothesis uncertainty in controlled claim-level settings and substantially outperforms random evidence selection in end-to-end multi-hop question answering, while maintaining performance comparable to strong relevance-based baselines. These results highlight a fundamental distinction between optimizing for raw answer accuracy and enforcing principled epistemic control during inference.

In a fully offline contradiction-injection stress test, ECR exhibits a sharp transition from epistemic sufficiency to principled non-convergence as structured conflict accumulates: entropy ceases to collapse, evidence exploration increases, and termination is explained by unresolved inconsistency rather than heuristic budgets.

Unlike retrieval architectures designed primarily for long-form unstructured documents, HyRAG v3 explicitly models structured tabular data with row-level grounding, enabling ECR to enforce numeric correctness and temporal consistency during inference.

Beyond benchmark performance, the ECR framework offers clear advantages for real-world and enterprise deployments. In high-stakes domains such as medicine, law, and finance, confidently synthesizing a single answer from conflicting or incomplete evidence can be costly or harmful. By providing a mathematically grounded mechanism to expose unresolved ambiguity when epistemic sufficiency cannot be reached, ECR functions as a principled constraint against unhedged generation. Furthermore, the ability to dynamically halt evidence accumulation once $H \le \epsilon$ is satisfied mitigates unnecessary computational overhead, reducing latency and cost associated with processing large, redundant context windows. This positions ECR as a resource-efficient inference-time control mechanism for scalable and risk-aware AI reasoning.

\subsection*{Limitations and Future Work}

We conclude by outlining key limitations of the current framework and highlighting promising directions for future research.

\paragraph{Hypothesis space coverage.}
A primary limitation of the current framework is its reliance on the initial hypothesis generation stage. Entropic Claim Resolution operates over an explicitly constructed hypothesis set and therefore inherits a bounded-coverage assumption; if the true answer is entirely absent from this space, the system may converge confidently to an incorrect explanation. In practice, this limitation can be mitigated by regenerating hypotheses when entropy fails to decrease or when accumulated evidence weakly supports all candidates. Future work will explore dynamic mid-loop hypothesis extension, soft hypothesis assignments, richer likelihood models, and tighter integration with learned retrievers to further strengthen entropy-guided reasoning under uncertainty.
Importantly, ECR’s refusal to converge under explicit contradiction is a deliberate design choice rather than a limitation: when the evaluated evidence is epistemically incoherent, the framework exposes ambiguity instead of forcing posterior collapse. This behavior preserves epistemic correctness but may yield non-decisive outputs in genuinely inconsistent corpora.

An orthogonal robustness regime involves distractor accumulation without evidence removal, which we leave to future investigation.

Finally, while this work evaluates Entropic Claim Resolution specifically within Retrieval-Augmented Generation, the underlying methodology naturally extends to agentic and autonomous contexts. Our approach suggests a perspective where agent actions (such as executing tools, querying external APIs, or taking exploratory steps) can be modeled dynamically as entropy-minimizing decisions evaluated under a rigorous Expected Entropy Reduction criterion. This aligns with recent advancements in autonomous cognitive control, including topology-aware routing \cite{digioia2026cascade} and dynamic temporal pacing \cite{digioia2026learning}, providing a formal alternative to standard prompt-driven or heuristic action-selection policies. We view the integration of decision-theoretic primitives into continuous agentic feedback loops as a compelling frontier for building robust and mathematically grounded autonomous systems.

\clearpage
\FloatBarrier

\clearpage

\section*{Appendix}
\addcontentsline{toc}{section}{Appendix}

\appendix

\setcounter{figure}{0}
\setcounter{table}{0}
\renewcommand{\thefigure}{A.\arabic{figure}}
\renewcommand{\thetable}{A.\arabic{table}}

\section{\texorpdfstring{$\lambda$-Sweep Robustness}{Lambda-Sweep Robustness}}
To test whether coherence-aware behavior requires fragile tuning, we sweep the coherence bonus weight $\lambda$ in ECR's evidence selection policy over $\{0, 0.01, 0.025, 0.05, 0.1\}$ while keeping the offline protocol, budgets, and contradiction injection rates $\alpha\in\{0.0,0.3,0.5\}$ fixed. We observe that $\lambda=0$ behaves as entropy-only control and can converge to a dominant hypothesis even under contradiction injection, whereas any tested non-zero $\lambda$ yields the same coherence-aware regime in which explicit contradictions are surfaced and prevent epistemic collapse; consequently, behavior saturates across all tested $\lambda>0$. We set $\lambda=0.05$ as the default. As shown in Table~\ref{tab:lambda_sweep_summary}, we observe a sharp transition between entropy-only control ($\lambda=0$) and coherence-aware control ($\lambda>0$), with behavior saturating for all tested non-zero values. This indicates that ECR does not require fine-grained hyperparameter tuning to surface epistemic inconsistency.

\begin{figure}[H]
\centering
\begin{tikzpicture}[x=1cm,y=3.5cm]

\draw[->] (0,0) -- (10.6,0) node[below right] {$\lambda_{\mathrm{conflict}}$};
\draw[->] (0,0) -- (0,1.15) node[above left] {Ambiguity Exposure};

\foreach \x/\xl in {0/0,1/0.01,2.5/0.025,5/0.05,10/0.10} {
    \draw[gray!20] (\x,0) -- (\x,1.05);
    \draw (\x,0) -- (\x,-0.025);
    \node[below] at (\x,-0.025) {\scriptsize \xl};
}
\foreach \y in {0,1} {
    \draw[gray!20] (0,\y) -- (10,\y);
    \draw (0,\y) -- (-0.08,\y);
    \node[left] at (-0.08,\y) {\scriptsize \y};
}

\draw[blue, thick] (0,0) -- (10,0);
\foreach \x in {0,1,2.5,5,10} {
    \fill[blue] (\x,0) circle (1.5pt);
}

\draw[orange, thick, dashed] (0,0.015) -- (10,0.015);
\foreach \x in {0,1,2.5,5,10} {
    \fill[orange] (\x,0.015) circle (1.5pt);
}

\draw[green!70!black, thick] (0,0) -- (0.8,0);
\draw[green!70!black, thick] (0.8,0) -- (0.8,1);
\draw[green!70!black, thick] (0.8,1) -- (10,1);

\fill[green!70!black] (0,0) circle (1.5pt);
\foreach \x in {1,2.5,5,10} {
    \fill[green!70!black] (\x,1) circle (1.5pt);
}

\begin{scope}[shift={(6.0,0.8)}]
    \draw[blue, thick] (0,0.18) -- (0.9,0.18);
    \fill[blue] (0.45,0.18) circle (1.5pt);
    \node[right] at (1.0,0.18) {\scriptsize $\alpha = 0.0$};

    \draw[orange, thick, dashed] (0,0.09) -- (0.9,0.09);
    \fill[orange] (0.45,0.09) circle (1.5pt);
    \node[right] at (1.0,0.09) {\scriptsize $\alpha = 0.3$};

    \draw[green!70!black, thick] (0,0.00) -- (0.9,0);
    \fill[green!70!black] (0.45,0) circle (1.5pt);
    \node[right] at (1.0,0.00) {\scriptsize $\alpha = 0.5$};
\end{scope}

\end{tikzpicture}
\caption{
Ambiguity exposure as a function of the coherence weight $\lambda_{\mathrm{conflict}}$
under structured contradiction injection.
Empirically, ambiguity exposure exhibits a sharp phase transition:
for $\alpha=0.5$, exposure jumps from 0 to 1 for any tested $\lambda>0$,
while remaining 0 for $\alpha \le 0.3$ across all tested settings.
}
\label{fig:lambda_sweep_ambiguity}
\end{figure}
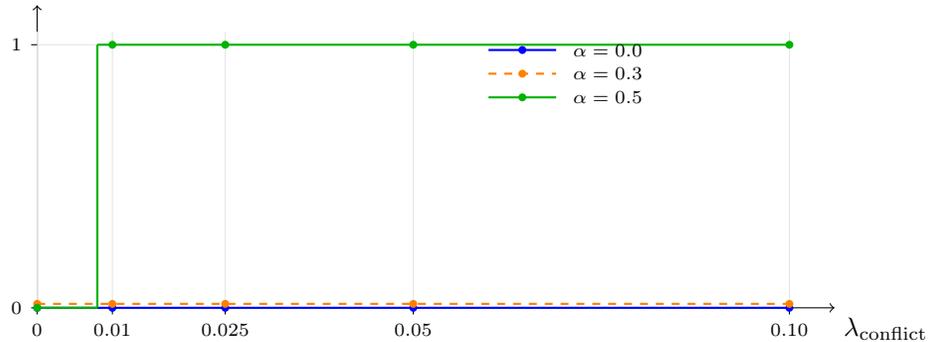

\begin{table}[H]
\centering
\small
\begin{tabular}{lrrrrrr}
\toprule
$\boldsymbol{\lambda}$ & \multicolumn{2}{c}{$\boldsymbol{\alpha=0.0}$} & \multicolumn{2}{c}{$\boldsymbol{\alpha=0.3}$} & \multicolumn{2}{c}{$\boldsymbol{\alpha=0.5}$}\\
 & \textbf{MeanClaims} & \textbf{Mean $H$} & \textbf{MeanClaims} & \textbf{Mean $H$} & \textbf{MeanClaims} & \textbf{Mean $H$}\\
\midrule
0.00  & 5.04 & 0.226 & 5.06 & 0.226 & 5.08 & 0.226\\
0.01  & 5.04 & 0.226 & 25.83 & 1.496 & 29.81 & 1.458\\
0.025 & 5.04 & 0.226 & 25.83 & 1.496 & 29.81 & 1.458\\
0.05  & 5.04 & 0.226 & 25.83 & 1.496 & 29.81 & 1.458\\
0.10  & 5.04 & 0.226 & 25.83 & 1.496 & 29.81 & 1.458\\
\bottomrule
\end{tabular}
\caption{$\lambda$-sweep summary statistics (offline, deterministic). Values are aggregated over 300 questions.}
\label{tab:lambda_sweep_summary}
\end{table}

\section{Offline Contradiction Sanity Test}
As an additional sanity check, we evaluate ECR in a minimal fully offline scenario consisting of a single claim and its explicit synthetic negation (a paired contradiction twin). In this setting, the expected outcome is that ECR evaluates both claims, flags unresolved conflict (\texttt{has\_unresolved\_conflict=True}), and refuses to emit a dominant hypothesis (\texttt{dominant\_hypothesis=None}). This deterministic unit test passes.

\end{document}